%% file: X2Streaming_TTS_arxiv.tex
\documentclass[11pt,letterpaper]{article}
\usepackage{arxiv}
\usepackage[hyphens]{url}
\usepackage{graphicx}
\usepackage{natbib}
\usepackage{caption}
\usepackage{booktabs}
\usepackage{multirow}
\usepackage{pifont}
\usepackage{amsmath}
\usepackage{amssymb}
\usepackage{amsthm}
\usepackage{placeins}
\newtheorem{theorem}{Theorem}

\theoremstyle{remark}

\renewcommand{\shorttitle}{X2Streaming-TTS}
\date{}

\title{X2Streaming-TTS: Causal Token-Level Text-to-Speech from Streaming Text with Speech-State Inheritance}
\author{
Rime Wen$^{*}$,
Zehan Liu$^{*}$,
Shawn Qin,
Lights Shi,
Roy Gan,
Hao Wang$^{\dagger}$,
Qian Wang \\[0.6em]
X Square Robot \\[0.8em]
{\normalsize
$^{*}$These authors contributed equally.\;
$^{\dagger}$Corresponding author.}
}

\begin{document}
\maketitle

\begin{abstract}
Streaming text-to-speech is essential for low-latency spoken dialogue systems, yet many systems wait for sentence-level text and are therefore only pseudo-streaming. True token-level synthesis must generate speech from uncertain prefixes while maintaining perceptual continuity over an unbounded stream with bounded context. We present X2Streaming-TTS, a causal TTS framework that consumes asynchronously arriving text tokens and emits speech without accessing future input. To handle uncertain prefixes, we introduce causal commitment, which keeps ambiguous expressions provisional through uncertainty-aware buffering and performs capacity-adaptive, punctuation-aware segmentation. To preserve acoustic continuity, we further introduce causal speech-state inheritance, which carries the complete Code2Wav state and selected historical Talker states across segment boundaries. Together with an attention prior constraint, it blocks access to future positions while retaining bounded acoustic context. Experiments show that X2Streaming-TTS outperforms existing pseudo-streaming models on most subjective and objective metrics. Further analysis shows that causal commitment stabilizes online segmentation and reduces failures caused by insufficient context, while speech-state inheritance improves boundary continuity without degrading naturalness or speaker identity. X2Streaming-TTS thus achieves strict token-level synthesis with quality comparable to the evaluated offline baselines, a median time to first audio token (TTFT) of 15.8\,ms for a single request, and a median TTFT of 260.8\,ms at 128 concurrent requests. Our implementation is publicly available at \url{https://github.com/X-Square-Robot/X2Streaming-TTS}.
\end{abstract}

\section{Introduction}

Once speech is played, it cannot be revised. When text is generated token by token by an upstream language model, this irreversibility becomes a fundamental constraint rather than merely an engineering concern: the TTS system must decide when to speak based solely on the observed text prefix, while future tokens cannot alter audio that has already been delivered. For example, suppose the language model has emitted the prefix \texttt{He finished 3}. It may subsequently become \texttt{He finished 3 laps}, in which \texttt{3} is pronounced as \emph{three}, or \texttt{He finished 3rd}, in which it is pronounced as \emph{third}. Although the two pronunciations begin similarly, they diverge before either word is complete. A system that has already emitted \emph{three} therefore cannot revise it into \emph{third}. Offline synthesis does not encounter this decision, nor does a system that buffers an entire sentence before synthesizing it, because the relevant text has already been fixed in both cases.

Three requirements that are straightforward in offline TTS become difficult to satisfy simultaneously in token-level streaming. First, pronunciation: incomplete numbers, units, and symbols may be reinterpreted by characters that arrive after they have already been voiced. Second, breathing room: speakers pause where permitted by the language, but punctuation-only segmentation produces numerous short segments that expend the generation budget on repeated stopping and restarting, whereas a fixed window fills the budget by cutting at positions unrelated to linguistic structure. Third, continuation: a segment generated from silence must re-establish pitch and timbre from scratch, making the resulting seam perceptible.

Taken together, these three challenges show that token-level streaming is not merely a matter of reducing first-audio latency. More fundamentally, it requires the system to make irreversible commitments under partial observability: each decision produces output that cannot be revised, even though its correctness may depend on information that has not yet arrived. Simultaneous translation emits target words before the source sentence ends \citep{ma2019stacl,ma2019mma}, streaming recognition displays partial hypotheses \citep{graves2012rnnt,shi2020emformer}, and long-video generation cannot re-render frames it has produced \citep{henschel2024streamingt2v,yin2025causvid}. All share one structure: \emph{when may I commit}, and \emph{how do I continue the state I have produced}.

We address these questions for streaming speech synthesis with two complementary mechanisms. \emph{Causal commitment} governs decisions up to and including segment closure, whereas \emph{causal speech-state inheritance} governs state propagation across segment boundaries. Removing either mechanism leaves an observable deficiency: without speech-state inheritance, a measurable pitch discontinuity appears at segment boundaries; without causal commitment, generation may exhaust its capacity and force segmentation at linguistically inappropriate positions. Figure~\ref{fig:arch} maps these challenges onto the synthesis pipeline: the left panel traces token-level text release and audio generation, while the right panel shows first-audio latency, irreversible commitment under prefix ambiguity, and cross-segment discontinuity together with the mechanisms that address them. A streaming frontend determines which observed text can be released, the Talker converts the released text into discrete acoustic tokens, and Code2Wav decodes these tokens into a waveform. Causal commitment determines when the current segment closes, while causal speech-state inheritance determines the state from which the next segment continues. Our contributions are as follows:

\begin{itemize}
\item We introduce a causal-commitment mechanism that keeps ambiguous numbers, units, and symbols provisional until their pronunciations are resolved, while formulating capacity and boundary admission as a unified constrained segmentation problem.
\item We introduce a causal speech-state inheritance mechanism that transfers the complete waveform-decoder state and a fixed number of acoustic states through two independent paths, using a fixed causal prior that assigns zero weight to future positions and an explicitly bounded residual.
\item We develop an end-to-end causal TTS system that integrates these two mechanisms to perform strict token-level synthesis without access to future text. Experimental results demonstrate synthesis quality comparable to that of the evaluated offline baselines. Median time to first audio token (TTFT) is 15.8\,ms for a single request and 260.8\,ms at 128 concurrent requests.
\end{itemize}

\begin{figure}[!t]
\centering
\includegraphics[width=\textwidth]{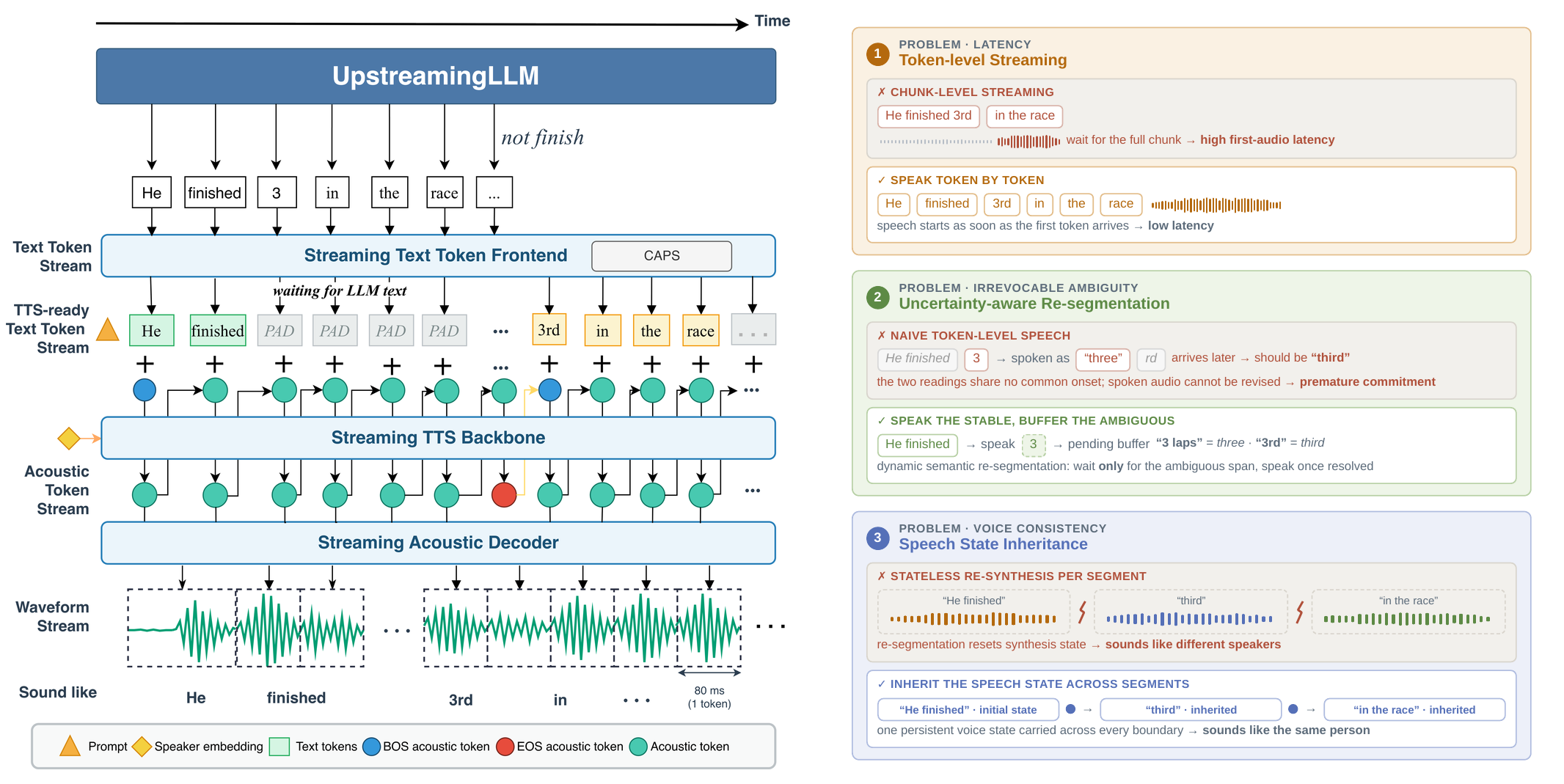}
\caption{\textbf{Left:} Streaming pipeline under token-level arrival. The frontend releases only TTS-ready text (\texttt{PAD} when nothing is ready; ambiguous spans such as \texttt{3} in \texttt{He finished 3} are held until pronunciation is resolved), the Talker emits acoustic tokens, and Code2Wav streams waveform for immediate playback. \textbf{Right:} Three challenges and the corresponding mechanisms---(1) first-audio latency via token-level rather than chunk-level start; (2) irreversible early commitment via uncertainty-aware buffering; (3) cross-segment discontinuity via speech-state inheritance.}
\label{fig:arch}
\end{figure}

\section{Related Work}

We organize prior work by the text context available at synthesis onset and by how systems
handle boundary selection, acoustic capacity, and cross-segment state.

\paragraph{Fully observed text, streaming audio out.} These methods begin synthesis after a complete sentence becomes available and reduce output latency through autoregressive acoustic generation, causal or chunk-aware decoding, and streaming vocoders \citep{du2024cosyvoice,du2024cosyvoice2, du2025cosyvoice3,xie2025fireredtts2,kong2020hifigan,siuzdak2023vocos}. They build on discrete audio codecs \citep{zeghidour2021soundstream,defossez2022encodec} and language-model-based zero-shot TTS architectures \citep{wang2023valle,lajszczak2024basetts,ye2025llasa, chen2024f5tts}. Although they enable concurrent audio generation and playback, waiting for a complete sentence introduces input-side latency and does not address pronunciation or segmentation under uncertain text prefixes.

\paragraph{Streaming text with limited lookahead.} LiveSpeech, SyncSpeech, and SpeakStream
reduce the required context through fixed lookahead, token-synchronous decoding, and
text--speech interleaving
\citep{dang2024livespeech,sheng2025syncspeech,bai2025speakstream}. Related work has also
investigated prosodic-boundary and pause prediction
\citep{dai2022prosodyannotation,yang2023pauseinsertion,yang2025phrasebreak}. Boundary-aware
streaming generation \citep{liu2026boundaryaware} uses limited future words, boundary
post-training, and a bounded sliding prompt, while MagpieTTS-LF
\citep{ghosh2026magpiettslf} combines punctuation-aware chunks, historical text,
attention-tracking states, and an inference-time soft prior. These methods reduce sentence-level
waiting but still depend on future tokens, so their latency remains coupled to the upstream
generation rate and they do not satisfy strict zero-lookahead causality.

\paragraph{Segmentation without acoustic cost.} Standalone text segmenters
\citep{frohmann2024sat} can identify linguistically appropriate boundaries but do not account
for acoustic generation costs. A suitable boundary may therefore appear only after the
acoustic budget has been exhausted. We use SaT-3L, which accesses up to 48 future subwords,
only as a boundary-quality reference. Moreover, text segmentation does not define what speech
state should be transferred across a boundary and therefore cannot ensure cross-segment
continuity.

Existing methods thus wait for complete text, rely on future context, or ignore acoustic
capacity and cross-segment state; our approach instead couples linguistic readiness with
acoustic cost under strict causality and transfers bounded speech state across committed
boundaries.

\section{X2Streaming-TTS}

Zero lookahead is affordable because text is information-dense, whereas
speech is temporally redundant. We quantify this difference using the
acoustic-to-text expansion ratio $\rho$, defined as the number of
autoregressive acoustic decoding steps per text token. For a completed
segment $S_k$ containing $n_k$ text tokens and requiring $A_k$ acoustic
decoding steps, its realized expansion ratio is
$\rho_k=A_k/n_k$. Because $\rho_k$ is typically well above one, voicing
the currently available text provides time for subsequent tokens to
arrive. Lookahead-based methods consume future text and therefore require
the upstream model to wait. In contrast, our method exploits the time
already available on the acoustic side. After each completed segment,
the system updates an online estimate $\widehat\rho_k$, turning the
question ``when may I speak'' into a causal admission test.
Let $\tau$ denote wall-clock time, and let $N(\tau)$ be the number of text
tokens observed by time $\tau$. Let $y_u$ denote the $u$-th discrete
acoustic token, generated at time $\tau_u$, and let $h_{k-1}$ denote the
speech state inherited from the preceding segment. Strict causality
requires
\begin{equation}
y_u \perp x_{N(\tau_u)+1:}
\mid x_{1:N(\tau_u)},y_{<u},h_{k-1},
\label{eq:strict-causality}
\end{equation}
that is, an acoustic token may depend only on the text observed before
its generation time, previously generated acoustic tokens, and inherited
speech state.
Implementing this constraint requires a backbone with three properties:
countable acoustic output, transferable decoding state, and observable
remaining capacity. Qwen3-TTS \citep{hu2026qwen3tts} provides all three.
Its Talker emits countable discrete acoustic tokens, its Code2Wav module
maintains a waveform-decoder state decoupled from the linguistic
component, and its KV-cache-aware decoder exposes the remaining capacity
in cache positions.

\subsection{Causal Commitment}

Causal commitment determines which portion of the arrived text can be irreversibly released
for synthesis and when the current segment should be closed. It consists of two coupled
components. Uncertainty-aware semantic readiness prevents expressions whose pronunciations
may still change from being released prematurely, while capacity-adaptive punctuation-aware
segmentation closes the released prefix before the acoustic generation budget is exhausted.

\subsubsection{Uncertainty-Aware Semantic Readiness}

Let $X_t=x_{1:t}$ denote the text observed after the arrival of token
$x_t$. The frontend maintains a pending suffix $U_t$ whose pronunciation
may still be altered by future tokens. Let $\oplus$ denote token-sequence
concatenation. Upon receiving $x_t$, the frontend partitions the buffered
text as
\begin{equation}
U_{t-1}\oplus x_t=E_t\oplus U_t,
\label{eq:readiness-partition}
\end{equation}
where $E_t$ is the longest newly eligible prefix and $U_t$ is the
remaining unresolved suffix. Deterministic closure rules cover numbers,
units, symbols, abbreviations, and punctuation. A span is released only
when no continuation admitted by these rules can change its
pronunciation; once released, it is never revised.
For example, after observing \texttt{He finished 3}, the frontend retains \texttt{3} in
$U_t$. When subsequent tokens resolve the expression, the complete span is normalized and
released atomically, ensuring that the acoustic model never receives a partial normalization.
At the end of input, any remaining suffix is treated as closed and normalized using the same
rules. Because both $E_t$ and $U_t$ depend only on $X_t$, this procedure is strictly causal.
Semantic readiness and capacity control operate sequentially. Only ready, normalized units are
admitted to the active TTS segment, while $U_t$ remains outside it. If the active segment
approaches its estimated capacity before $U_t$ is resolved, the system closes the preceding
stable prefix without splitting the unresolved expression. Once resolved, the expression is
admitted atomically to the current segment if sufficient capacity remains, or otherwise to the
next segment. Thus, semantic readiness determines \emph{what} may be committed, whereas
capacity-adaptive segmentation determines \emph{when} the active segment must close.

\subsubsection{Capacity-Adaptive Punctuation-Aware Segmentation}
\paragraph{Problem and offline reference.}
Let $x_{1:n}$ be the text tokens. A boundary after $x_i$ has type
$q_i\in\{1,2,3,Hard\}$, where tiers $1$--$3$ denote sentence-final,
clause-level, and weaker punctuation, respectively, and $Hard$ means a hard
non-punctuation break, with
$0\le c(1)<c(2)<c(3)<c(Hard)$. A partition $\pi=(0=b_0<\cdots<b_m=n)$ induces segments
$S_k=x_{b_{k-1}+1:b_k}$. Given synthesis context $\xi$, let
$G(S;\xi)$ denote the realized KV-cache footprint of segment $S$,
measured in cache positions, and let $B$ denote the usable cache
capacity. Let $\lambda_{\rm seg}\ge0$ be the penalty associated with
creating an additional segment. The hindsight problem is
\begin{equation}
\begin{aligned}
\pi^\star_{\rm off}\in\arg\min_\pi
\left\{\lambda_{\rm seg}m+\sum_{k=1}^{m-1}c(q_{b_k})\right\}\\[4pt]
\text{s.t.}\quad G(S_k;\xi)\le B\quad\forall k.
\label{eq:offline-objective}
\end{aligned}
\end{equation}
The natural end has no boundary penalty, and streaming latency is
evaluated separately. If span cost and feasibility depend only on its
endpoints, the exact offline solution follows from
\begin{equation}
\begin{aligned}
V(0)&=0,\\[4pt]
V(j)&=\min_{\substack{0\le i<j\\G(x_{i+1:j};\xi)\le B}}
\left[V(i)+\lambda_{\rm seg}+\mathbf 1\{j<n\}c(q_j)\right].
\label{eq:offline-dp}  
\end{aligned}
\end{equation}
Backtracking returns the optimal boundaries. This takes $O(n^2)$
time, or $O(nC)$ if a segment contains at most $C$ tokens. If inherited
state affects feasibility, it must be included in the DP state. In
evaluation, we use either measured $G$ or one frozen resource predictor
in the feasibility test.


\paragraph{Delayed-feedback capacity adaptation.}
For a completed segment $S_k$ of length $n_k$, let $P_k$ be the cache
length at autoregressive decoding onset and $A_k$ the subsequent
decoding length, including EOS and draining; hence
$G(S_k;\xi)=P_k+A_k$. CAPS updates the projected EMA
\begin{equation}
\widehat \rho_{k+1}
=\Pi_{[\rho_{\min},\rho_{\max}]}
\left((1-\beta_k)\widehat \rho_k+\beta_k\frac{A_k}{n_k}\right),
\label{eq:ema}
\end{equation}
where a larger $\beta_k$ can be used after an overflow.
With prefill estimate $\widehat P_k$ and headroom $R$, it uses
\begin{equation}
\widehat G_k(S;\xi)=\widehat P_k+\widehat \rho_k|S|,\qquad
\widehat C_k=\left\lfloor
\frac{B-R-\widehat P_k}{\widehat \rho_k}\right\rfloor .
\label{eq:capacity}
\end{equation}
Here, $\widehat C_k$ is the predicted text-token capacity of segment
$k$, namely the maximum number of tokens CAPS allows before a forced
split under the current resource estimate.
We assume $B-R-\widehat P_k\ge\widehat \rho_k\ge1$. When a segment opens,
its estimate, capacity, and thresholds are frozen; feedback affects
only segments opened afterward.

\paragraph{Causal punctuation-aware rule.}
Choose $0<\alpha_1\le\alpha_2\le\alpha_3\le1$ and define
\[
T_{\ell,k}
=
\left\lceil\alpha_\ell\widehat C_k\right\rceil,
\qquad \ell\in\{1,2,3\}.
\]
After appending $x_t$, let $L_t$ denote the number of text tokens in
the active segment. CAPS closes the segment after $x_t$ if and only if
\begin{equation}
\left[
\bigvee_{\ell=1}^{3}
\left\{q_t=\ell\wedge L_t\ge T_{\ell,k}\right\}
\right]
\quad\text{or}\quad
L_t\ge\widehat C_k.
\label{eq:stopping-rule}
\end{equation}
The second condition implements a tier-4 hard boundary when no eligible
punctuation boundary is encountered before capacity is reached.

\paragraph{Fragmentation bound and scope.}
To analyze the number of segments, suppose that each segment can
contain at most $C$ tokens. A sequence of $n$ tokens then requires at
least
\[
m_C^\star=\left\lceil\frac{n}{C}\right\rceil
\]
segments, and this minimum is achieved by cutting after every $C$
tokens. Under the CAPS stopping rule, every non-final segment contains
at least $\lceil\alpha_1 C\rceil$ tokens.
\begin{theorem}[Fragmentation bound]
If $\widehat C_k=C$ for all $k$ and every non-final segment is closed
by Eq.~\eqref{eq:stopping-rule}, then
\begin{equation}
m_{\rm CAPS}\le
\frac{C}{\lceil\alpha_1 C\rceil}m_C^\star+1
\le\frac{1}{\alpha_1}m_C^\star+1.
\label{eq:fragmentation-bound}
\end{equation}
\end{theorem}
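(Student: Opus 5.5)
The argument is a counting one: every non-final segment has at least $\lceil\alpha_1 C\rceil$ tokens, and there are $n$ tokens in total. Set $a=\lceil\alpha_1 C\rceil$; it is at least $1$ because $\alpha_1>0$ and $C\ge1$ under the standing assumption $B-R-\widehat P_k\ge\widehat\rho_k$.

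First I will justify the per-segment lower bound $|S_k|\ge a$ for $k<m_{\rm CAPS}$. A non-final segment closes after some $x_t$ because one disjunct of Eq.~\eqref{eq:stopping-rule} holds.
\begin{itemize}
\item If it is the tier-$\ell$ punctuation disjunct, then $L_t\ge T_{\ell,k}=\lceil\alpha_\ell C\rceil\ge\lceil\alpha_1 C\rceil$, since $\alpha_\ell\ge\alpha_1$ and the ceiling is monotone.
\item If it is the hard disjunct, then $L_t\ge C\ge\lceil\alpha_1 C\rceil$, since $\alpha_1\le1$ and $C$ is an integer.
\end{itemize}
In both cases the closed segment has length $L_t\ge a$. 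I will also note that the rule closes as soon as $L_t$ reaches $C$, so no segment exceeds $C$ tokens. This is not needed for the count, but it confirms that $C$ is the right per-segment capacity in the comparison with $m_C^\star$.

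Next I sum over segments. With $m=m_{\rm CAPS}$, the $m-1$ non-final segments are disjoint and lie inside $x_{1:n}$, so $(m-1)a\le n$. Hence $m\le n/a+1$.

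Then I compare with $m_C^\star$. From $m_C^\star=\lceil n/C\rceil\ge n/C$ we get $n\le C\,m_C^\star$, and therefore
\[
m\le \frac{C}{a}\,m_C^\star+1,
\]
which is the first inequality. For the second, $a\ge\alpha_1 C$ gives $C/a\le 1/\alpha_1$. If the paper's convention is that an empty final segment is never created, the bound is unaffected, because the final segment contributes only the additive $+1$.

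None of these steps is a real obstacle. The only point that needs care is the first step, where the tier-$1$ threshold must bound every tier's threshold from below; this uses the ordering $\alpha_1\le\alpha_2\le\alpha_3$ together with monotonicity of the ceiling.
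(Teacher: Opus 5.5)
Your proof is correct and follows the paper's own argument: set $L=\lceil\alpha_1C\rceil$, use $(m_{\rm CAPS}-1)L\le n\le Cm_C^\star$, and finish with $C/L\le1/\alpha_1$. Your one addition is that you derive the per-segment lower bound from both disjuncts of Eq.~\eqref{eq:stopping-rule}, using $\alpha_\ell\ge\alpha_1$ and the integrality of $C$; the paper only asserts this bound just before the theorem.
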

\begin{proof}
Let $L=\lceil\alpha_1C\rceil$. CAPS has $m_{\rm CAPS}-1$ non-final
segments, each containing at least $L$ tokens, so
$(m_{\rm CAPS}-1)L\le n$. The benchmark has $m_C^\star$ segments, each
holding at most $C$ tokens, so $n\le Cm_C^\star$. Therefore
\[
m_{\mathrm{CAPS}}
\le \frac{n}{L}+1
\le \frac{C}{L}m_C^\star+1
\le \frac{1}{\alpha_1}m_C^\star+1.
\qedhere
\]
\end{proof}
The theorem concerns segment count only. The additive $1$ accounts
for a possibly short final segment. It does not guarantee punctuation
quality or physical KV-cache safety. A predicted-feasible segment is
guaranteed to fit the cache if
\[
G(S;\xi)\le\widehat G_k(S;\xi)+R_{\rm cap}.
\]
Since an EMA does not ensure this condition for every segment, we
evaluate prediction error, overflow rate, partition cost, prosody,
and latency empirically.

\paragraph{Implementation details.}
We implement CAPS as a token-driven finite-state machine in the
frontend. Sentence-final marks form tier 1, commas, semicolons, and
colons tier 2, and line breaks, ellipses, and dashes tier 3; trailing
quotes and brackets are ignored when assigning the tier. The cache
limit $B$ is read from the loaded engine. When a segment opens, CAPS
computes and freezes its capacity and thresholds, then tests
Eq.~\eqref{eq:stopping-rule} for every incoming token. At segment
completion, the backend reports the decoding steps and text-token
count used in Eq.~\eqref{eq:ema}; the update applies only to segments
opened later. Unless stated otherwise, we initialize $\widehat\rho_1=6.0$, use
$\beta_k=0.1$ normally and $\beta_k=0.5$ after overflow, and clip
$\widehat\rho_k$ to $[2,10]$. We set 
$(\alpha_1,\alpha_2,\alpha_3)=(0.7,0.8,0.9)$; hence the segment-count
factor in Eq.~\eqref{eq:fragmentation-bound} is at most
$10/7\approx1.429$. We allow at most two text segments to be queued,
while acoustic decoding remains sequential so that segment $S_k$
starts only after the inherited state of $S_{k-1}$ becomes available.

\subsection{Causal Speech-State Inheritance}

\begin{figure}[!t]
\centering
\includegraphics[width=0.52\textwidth]{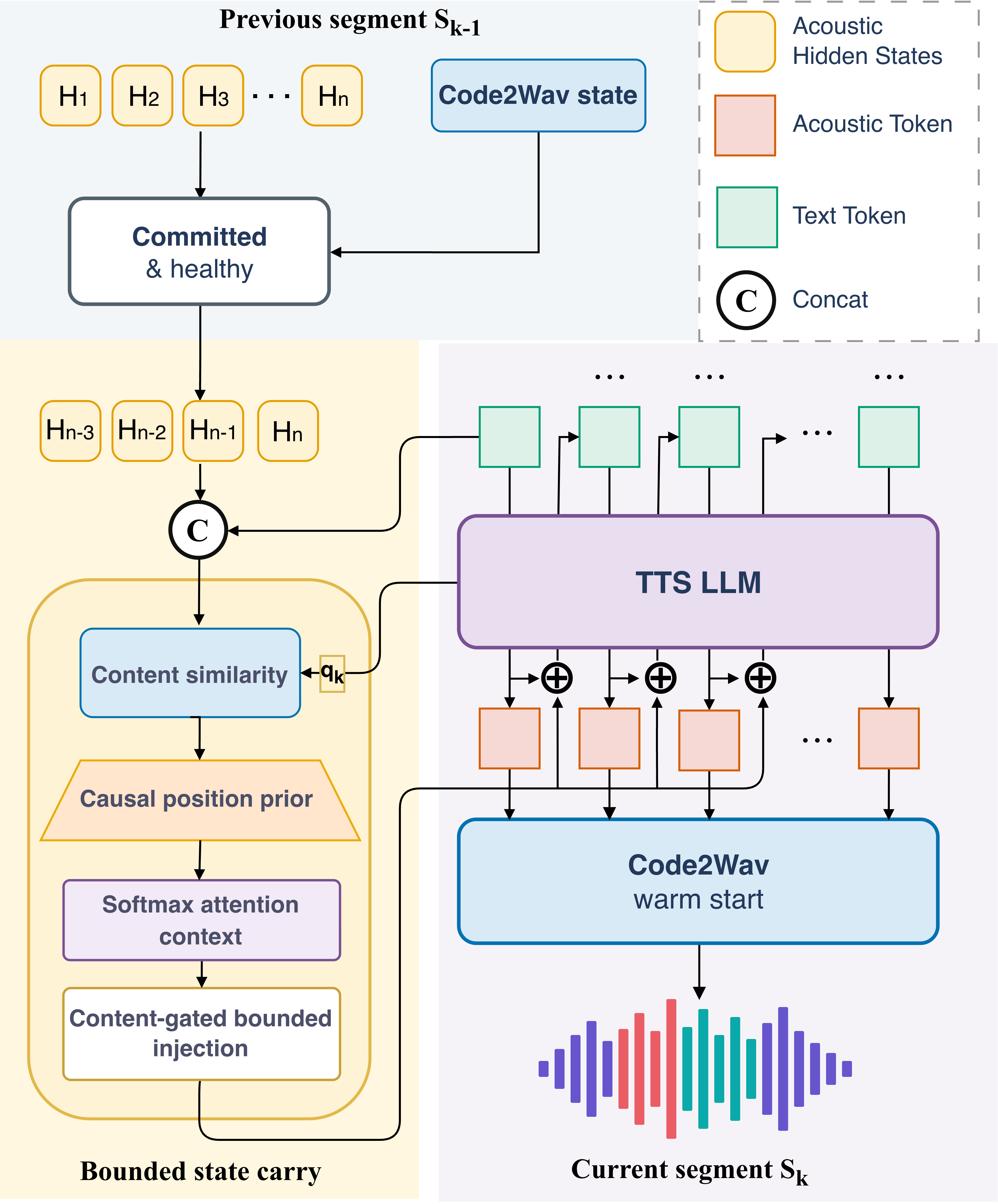}
\caption{Speech-state inheritance across a segment boundary. After a
health check on $S_{k-1}$, its Code2Wav state warm-starts $S_k$, while
its trailing Talker states provide bounded historical context through
causal-prior attention and a gated residual.}
\label{fig:inherit}
\end{figure}

The end of a segment does not mark the end of its speech state. If the next segment starts from
a zero state, the waveform decoder must rebuild its cache, while the acoustic component must
re-establish local pitch, energy, and timbre. These resets can produce a measurable
discontinuity at the segment boundary. Figure~\ref{fig:inherit} illustrates how the speech
state is transferred across such a boundary.

\subsubsection{Two independent state paths.}
After segment $S_{k-1}$ ends, the system stores and transfers the
complete Code2Wav state bundle, including its KV cache, convolution
states, transposed-convolution states, and frame index. This bundle
warm-starts waveform decoding for segment $S_k$.
In parallel, the system transfers the trailing $H$ token-aligned Talker
states as bounded historical memory. Here, $H$ is a fixed global
hyperparameter independent of the stream length; its numerical value is
reported in the experimental setup. This fixed history bounds the
inherited acoustic context by $O(H)$.

A health check determines whether the inheritance chain remains active.
Both state paths are retained only if the predecessor received complete
input, consumed all admitted text, terminated acoustic decoding normally,
produced a realized expansion ratio
$\rho_k=A_k/n_k\in[1,12]$, and yielded an intact Code2Wav snapshot.
Otherwise, both inherited states are cleared and generation restarts
from the default state.

\subsubsection{Fixed causal attention prior.}
As shown in Figure~\ref{fig:inherit}, we form the memory
$\mathcal M=[\mathbf h_{-H+1:0};\mathbf e_{1:L}]$ by concatenating the
trailing $H$ Talker states from $S_{k-1}$ with the token-aligned
representations of $S_k$ after projection into a shared space. Let
$\mathbf m_j\in\mathcal M$ denote memory position $j$ and
$d_{u,j}=u-j$ its signed distance from query $\mathbf q_u$. The attention
logits and weights are
\begin{equation}
\ell_{u,j}=2\cos(\mathbf q_u,\mathbf m_j)+b(d_{u,j}),
\qquad
a_{u,j}=\operatorname*{softmax}_{j}(\ell_{u,j}),
\label{eq:prior-attention}
\end{equation}
where
\begin{equation}
b(d)=
\begin{cases}
-d^2/8, & 0\le d\le4,\\
\log 0.1, & d\ge5,\\
-\infty, & d<0.
\end{cases}
\label{eq:causal-prior}
\end{equation}
Thus, future positions receive zero weight. For equal content
similarity, the current position has prior odds of $10\!:\!1$ over
positions at distance at least five, while relevant historical states
remain accessible through the content term.

\subsubsection{Bounded injection.}
We use normalized attention concentration as the gating score. Let
$M_u=|\mathcal A_u|$ and define
\begin{equation}
s_u=
\begin{cases}
1+\dfrac{\sum_{j\in\mathcal A_u}a_{u,j}\log a_{u,j}}
{\log M_u}, & M_u>1,\\[5pt]
1, & M_u=1.
\end{cases}
\label{eq:attention-gate}
\end{equation}
Thus, $s_u\in[0,1]$, with larger values indicating more concentrated
attention. Using the bounded gain
$g_u=0.015(0.5+0.5s_u)$, we inject the attended causal context as
\begin{equation}
\overline{\mathbf m}_u
=\sum_{j\in\mathcal A_u}a_{u,j}\mathbf m_j,
\qquad
\mathbf z_u'
=\mathbf z_u+g_u\overline{\mathbf m}_u.
\label{eq:bounded-injection}
\end{equation}
Because $0.0075\le g_u\le0.015$, the resulting perturbation satisfies
$\|\mathbf z_u'-\mathbf z_u\|
\le0.015\max_{j\in\mathcal A_u}\|\mathbf m_j\|$, explicitly bounding
the influence of the carried context.
\section{Experiments}

\input{tables/table1_main_results}

\subsection{Setup and Evaluation}

\paragraph{Data.} The segmentation and continuity evaluations use 59 held-out passages from the Mandarin proficiency subset of a Chinese TTS corpus, comprising 954 source-sentence boundaries within passages. Overall synthesis quality is evaluated using SEED-TTS-Eval \citep{anastassiou2024seedtts}, a target-speaker evaluation set, and long-text extrapolation from $1\times$ to $10\times$. Stability experiments use passages lasting hundreds of seconds and evaluate each system over 60 fixed 10\,s windows. Symbol-oriented evaluation covers numeric, streaming-ambiguity, and non-natural categories, with identical text provided to all systems and subjective ratings collected from 120 listeners. Text tokens are supplied at a fixed rate for controlled evaluation; a deployed upstream language model may exhibit variable token-generation latency. All X2Streaming-TTS runs fix the inherited Talker history at $H=4$. Serving latency is measured separately on one RTX 5090 using the deployed BF16 engine. We test 1 to 128 concurrent requests, with 20 measured rounds per level after 3 warmup rounds.

\paragraph{Metrics.} Recognition performance is measured using Whisper \citep{radford2022whisper} and Paraformer \citep{gao2022paraformer}. Speaker similarity is measured using ECAPA-TDNN \citep{desplanques2020ecapatdnn,wang2022wespeaker}, and predicted naturalness is measured using UTMOS \citep{saeki2022utmos}. To quantify discontinuity at a segment boundary, we extract 1000\,ms of audio on each side of the boundary and compute the absolute differences in mean pitch and mean energy, denoted by $\Delta F0$ and $\Delta E$, respectively. PBD is their mean after min--max normalization. Each estimate is first aggregated at the passage level and then bootstrapped over passages 10{,}000 times.

\subsection{Main Results}

The first and final rows of Table~\ref{tab:main} provide the most closely controlled
comparison. They use the same offline backbone and identical model weights, differing only in
their input and decision conditions. Their performance difference therefore reflects the
effect of strictly incremental token-level input. Our method achieves lower recognition error
than the offline reference in 3 of the 8 conditions, while its largest degradation among the
remaining 5 conditions is 0.62 percentage points. These results indicate that token-level
input introduces little measurable loss of intelligibility under the evaluated conditions.
Among the evaluated streaming systems, our method achieves the lowest error in 6 of the 8
conditions. The exceptions are the SEED EN condition, where CosyVoice 3-S obtains a WER of
1.68 compared with our 1.93, and the long-text $2\times$ condition, where it obtains 3.36
compared with our 3.67. Our method therefore achieves the lowest recognition error on most
evaluated streaming conditions.
The long-text results provide further evidence of stability. Our method obtains 2.55 at
$1\times$, outperforming all evaluated comparators, including both offline systems. Its error
increases to 4.36 at $10\times$ but remains below that of every streaming comparator. This
trend is consistent with capacity-based commitment: each segment receives a bounded generation
budget, and additional input length is accommodated by introducing further bounded segments
while inheriting speech state across their boundaries.

\subsection{Causal Commitment}

\input{tables/table2_commitment}

Table~\ref{tab:commit} evaluates two complementary aspects of causal commitment. The left block
measures how effectively each policy uses the available generation budget. Compared with
punctuation-only segmentation, our controller increases utilization from 11.77\% to 76.93\%.
Compared with the fixed-window policy, it reduces the hard-cap rate from 87.13\% to 0.54\%.
All evaluated policies produce zero estimated-budget violations and zero token-cap violations.
Although the fixed-window policy achieves higher utilization at 92.48\%, it also incurs an
87.13\% hard-cap rate, indicating that most segments are terminated where no linguistic
boundary is available. In contrast, the proposed controller achieves high utilization while
rarely requiring a non-linguistic hard cut, providing a more favorable balance between
resource usage and boundary quality.
The right block evaluates the linguistic quality of the selected boundaries. Our joint
text--acoustic estimator achieves an F1 score of 0.952, with a miss rate of 0.057 and a
false-split rate of 0.036, slightly outperforming SaT-3L at an F1 score of 0.940. The two
deterministic policies obtain substantially lower F1 scores of 0.017 and 0.199, respectively,
showing that neither acoustic capacity nor punctuation alone is sufficient to identify
appropriate linguistic boundaries.

\subsection{Causal Speech-State Inheritance}

\input{tables/table4_continuity}

Our method achieves the best value in all five columns of Table~\ref{tab:cont}. It obtains a
PBD of 0.1092, with a bootstrap interval disjoint from that of the nearest comparator,
FireRedTTS-2 at 0.1915. This separation indicates a consistent advantage at the evaluated
boundaries.
This result is also reflected in the component metrics: our method obtains a pitch
discontinuity of 22.61\,Hz and an energy discontinuity of 1.66\,dB, both lower than those of
every comparator.
FireRedTTS-2 illustrates the importance of considering boundary continuity and long-term
speaker consistency jointly. Although it achieves the strongest boundary metrics among the
three comparators, its ECAPA centroid similarity is 0.5205, compared with 0.951 for our
method, indicating substantially weaker speaker consistency over long passages. Together,
PBD and ECAPA similarity measure complementary forms of consistency: local prosodic
consistency at segment boundaries and long-term speaker consistency across passages,
respectively.

\subsection{Symbols and Prefix Ambiguity}

\input{tables/table5_symbol}

Symbol-intensive text evaluates whether the system can delay an uncertain commitment and
subsequently synthesize the resolved expression correctly. As shown in
Table~\ref{tab:symbol}, our method achieves a CER of 2.00\% and a fully correct reading rate of
73.3\%, compared with 6.65\% and 40.0\%, respectively, for the strongest comparator.
The difference between the Read and Sem.\ columns is particularly informative. Although
73.3\% of the readings are completely correct, 93.33\% preserve the intended meaning,
indicating that most remaining errors are pronunciation variants that remain understandable
to listeners. Numeric and streaming-ambiguity items both achieve 0\% CER and 100\% correct
readings. These results support retaining an incomplete expression until its pronunciation is
resolved, particularly when one additional token is sufficient to disambiguate the expression.

\subsection{Inference Latency}

\begin{figure}[!t]
\centering
\includegraphics[width=0.54\textwidth]{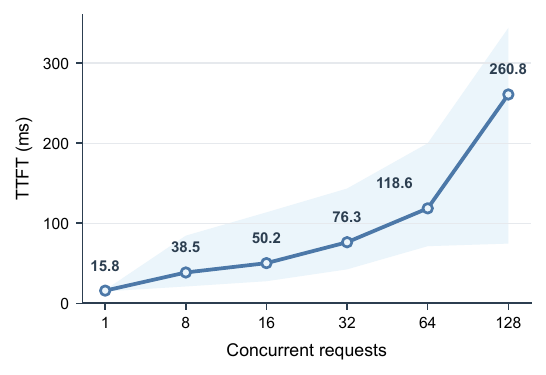}
\caption{Time to first audio token (TTFT) under concurrent requests on one RTX
5090. Points show client-side medians across the tested serving interfaces;
labels give the median values, and the shaded band spans the minimum to the
99th percentile across successful requests. Each interface was measured for 20
rounds after 3 warmup rounds, yielding $n=40$, 320, 640, 1{,}280, 2{,}560, and
5{,}120 requests at concurrency levels 1, 8, 16, 32, 64, and 128, respectively.}
\label{fig:latency}
\end{figure}
Figure~\ref{fig:latency} reports client-side TTFT as concurrency increases.
Median TTFT increases from 15.8\,ms for a single request to 38.5, 50.2, 76.3, 118.6, and
260.8\,ms at concurrency levels 8, 16, 32, 64, and 128, respectively. It remains below
120\,ms through 64 concurrent requests before rising more sharply at 128 requests. The
widening minimum-to-99th-percentile band indicates greater request-level variation under the
highest tested loads.

\section{Conclusion}

When text arrives token by token and emitted speech cannot be revised, a streaming synthesis
system must decide when the observed prefix is safe to speak and how to continue its speech
trajectory across segments. X2Streaming-TTS answers the first with causal
commitment, which holds unresolved expressions until their pronunciations are determined and
closes segments by weighing linguistic boundaries against acoustic capacity, with a provable
fragmentation bound under stated assumptions. It answers the second with causal speech-state
inheritance, transferring bounded speech state across boundaries without accessing future
positions. The resulting system matches the evaluated offline baselines in synthesis quality,
with median TTFTs of 15.8\,ms for a single request and 260.8\,ms at 128 concurrent requests.
The underlying principle is that text carries information more densely than speech: acoustic
generation buys time to process newly arrived text. The same decomposition
may extend to other irreversible online generation tasks, including simultaneous translation,
streaming recognition, and long-video generation \citep{huang2025selfforcing}.

\FloatBarrier
\bibliographystyle{aaai2027}
\bibliography{references}

\end{document}

%% file: tables/table1_main_results.tex
\begin{table*}[t]
    \centering
    \small
    \begin{tabular}{lccccccccccc}
    \toprule
    \multirow{2}{*}{Model} & \multirow{2}{*}{Streaming} & \multirow{2}{*}{Granularity}
    & \multicolumn{2}{c}{SEED-TTS-Eval} & \multicolumn{2}{c}{MiniMax}
    & \multicolumn{4}{c}{Long-text extrapolation} \\
    \cmidrule(lr){4-5} \cmidrule(lr){6-7} \cmidrule(lr){8-11}
    & & & CER$_{\mathrm{ZH}}$ & WER$_{\mathrm{EN}}$ & CER$_{\mathrm{ZH}}$ & WER$_{\mathrm{EN}}$
    & $1\times$ & $2\times$ & $5\times$ & $10\times$ \\
    \midrule
    Qwen3-TTS-12Hz-1.7B & \ding{55} & Offline
    & 1.10 & \textbf{1.43} & \underline{0.87} & \textbf{1.85}
    & \underline{3.95} & \textbf{3.05} & \textbf{4.01} & \textbf{3.93} \\
    F5-TTS & \ding{55} & Offline
    & 1.52 & 2.00 & 3.74 & 2.08
    & 4.42 & 3.81 & 5.04 & 4.95 \\
    \addlinespace[2pt]
    \midrule
    FireRedTTS-2 & \ding{51} & Chunk
    & 1.14 & 1.95 & 0.97 & 2.25
    & 5.36 & 4.79 & 5.04 & 5.32 \\
    CosyVoice 2-S & \ding{51} & Chunk
    & 1.45 & 2.57 & 1.98 & 2.38
    & 4.76 & 3.83 & 5.05 & 5.48 \\
    CosyVoice 3-S & \ding{51} & Chunk
    & \underline{0.81} & \underline{1.68} & 1.43 & 2.21
    & 4.72 & \underline{3.36} & 4.86 & 5.12 \\
    \textbf{X2Streaming-TTS(Ours)} & \ding{51} & \bf{Token}
    & \textbf{0.78} & 1.93 & \textbf{0.78} & \underline{1.86}
    & \textbf{2.55} & 3.67 & \underline{4.08} & \underline{4.36} \\
    \bottomrule
    \end{tabular}
    \caption{Intelligibility and long-text robustness (\% error; lower better). Bold best, underline second best. Granularity is the text available at the acoustic decision: Offline, Chunk, or Token.}
    
    \label{tab:main}
    \end{table*}

%% file: tables/table2_commitment.tex
\begin{table}[!t]
\centering
\small
\setlength{\tabcolsep}{3pt}
\begin{tabular}{lrrrrrr}
\toprule
& \multicolumn{3}{c}{Online replay} & \multicolumn{3}{c}{Matched budget} \\
\cmidrule(lr){2-4} \cmidrule(lr){5-7}
Policy & Seg. & Cap$\downarrow$ & Util.$\uparrow$
& F1$\uparrow$ & Miss$\downarrow$ & Split$\downarrow$ \\
\midrule
Fixed window   & 401  & 87.13 & \textbf{92.48} & 0.017 & 0.983 & 0.983 \\
Punct.\ only   & 3151 & \textbf{0} & 11.77     & 0.199 & 0.766 & 0.150 \\
Fixed ratio    & 470  & 1.70  & 78.90          & --    & --    & -- \\
SaT-3L         & --   & --    & --             & 0.940 & 0.068 & 0.050 \\
X2Streaming-TTS & 430  & 0.54  & 76.93
& \textbf{0.952} & \textbf{0.057} & \textbf{0.036} \\
\bottomrule
\end{tabular}
\caption{Causal commitment under two protocols (Cap/Util.\ in \%). Left: online replay on 59 passages (Seg.\ = segment count; Cap = hard-cap share; Util.\ = budget used). Right: matched-budget boundary quality vs.\ human annotations. Dashes: policy absent from that protocol.}
\label{tab:commit}
\end{table}

%% file: tables/table4_continuity.tex
\begin{table}[!t]
\centering
\small
\setlength{\tabcolsep}{4pt}
\resizebox{\textwidth}{!}{%
\begin{tabular}{llrrccc}
\toprule
System & Gran. & $\Delta$F0 (Hz)$\downarrow$ & $\Delta$E (dB)$\downarrow$
& PBD$\downarrow$ & ECAPA sim.$\uparrow$ & UTMOS$\uparrow$ \\
\midrule
CosyVoice 2-S & Chunk & 46.89 & 3.39
& 0.3427 [0.3226, 0.3636] & 0.9304 [0.9223, 0.9366] & 3.1949 [3.1277, 3.2582] \\
CosyVoice 3-S & Chunk & 47.53 & 3.41
& 0.3479 [0.3258, 0.3712] & 0.9264 [0.9215, 0.9319] & 2.6899 [2.6352, 2.7465] \\
FireRedTTS-2 & Chunk & 31.68 & 2.17
& 0.1915 [0.1794, 0.2041] & 0.5205 [0.4165, 0.6178]
& 3.2704 [3.0815, 3.4657] \\
\addlinespace[2pt]
X2Streaming-TTS & Token & \textbf{22.61} & \textbf{1.66}
& \textbf{0.1092} [0.1000, 0.1192] & \textbf{0.9511} [0.9454, 0.9565]
& \textbf{3.9200} [3.8388, 3.9960] \\
\bottomrule
\end{tabular}%
}
\caption{Boundary continuity and long-text stability under fixed-rate token arrival. Brackets: passage-bootstrap 95\% CIs; bold best. $\Delta$F0/$\Delta$E/PBD: $\pm1000$\,ms windows at 954 shared boundaries (59 passages). ECAPA/UTMOS: 10\,s fixed-window protocol (60 windows/system).}
\label{tab:cont}
\end{table}

%% file: tables/table5_symbol.tex
\begin{table}[!t]
\centering
\small
\setlength{\tabcolsep}{3pt}
\begin{tabular}{lrrrrr}
\toprule
System & CER$\downarrow$ & Read$\uparrow$ & UTMOS$\uparrow$ & MOS$\uparrow$
& Sem.$\uparrow$ \\
\midrule
CosyVoice 2-S & 33.41 & 0.0 & 3.075 & 3.220 & 0.00 \\
CosyVoice 3-S & 6.65  & 40.0 & 3.017 & 3.183 & 60.00 \\
FireRedTTS-2  & 18.21 & 13.3 & 2.788 & 3.629 & 6.67 \\
X2Streaming-TTS & \textbf{2.00} & \textbf{73.3} & \textbf{4.025} & \textbf{3.802}
& \textbf{93.33} \\
\bottomrule
\end{tabular}
\caption{Symbol and prefix-ambiguity evaluation on identical inputs (CER/Read/Sem.\ in \%). Read = fully correct reading; Sem.\ = listener-judged meaning preserved (120 listeners).}
\label{tab:symbol}
\end{table}